\def\NamedVersion{1}
\documentclass{article}
\usepackage[T1]{fontenc}
\usepackage{microtype,graphicx,booktabs}
\usepackage{hyperref}

\ifdefined\NamedVersion
  \usepackage[preprint]{icml2026}
\else
  \usepackage[preprint]{icml2026}
\fi
\usepackage{amsmath,amssymb,amsthm}
\hypersetup{hidelinks,pdfsubject={ICML formatting preview for planned 2027 submission; provisional 2026 style}}
\newtheorem{theorem}{Theorem}
\newtheorem{proposition}{Proposition}
\newtheorem{remark}{Remark}
\newcommand{\R}{\mathbb R}
\newcommand{\F}{\mathcal F}
\newcommand{\Q}{\mathcal Q}
\newcommand{\X}{\mathcal X}
\newcommand{\Th}{\Theta}
\icmltitlerunning{Robust Decisions from Projections}
\begin{document}
\twocolumn[
\icmltitle{Certifying Distributionally Robust Decisions from Projection Observations}
\begin{icmlauthorlist}
  \icmlauthor{Pouya Ahadi}{gatech}
  \icmlauthor{Kamran Paynabar}{gatech}
\end{icmlauthorlist}
\icmlaffiliation{gatech}{H. Milton Stewart School of Industrial and Systems Engineering,
Georgia Institute of Technology, Atlanta, GA 30332, USA}
\icmlcorrespondingauthor{Pouya Ahadi}{\mbox{pouya.ahadi@gatech.edu}}
\icmlcorrespondingauthor{Kamran Paynabar}{\mbox{kamran.paynabar@isye.gatech.edu}}
\icmlkeywords{Distributionally robust optimization, decision identification, projection observations}
\vskip 0.3in
]
\printAffiliationsAndNotice{}
\begin{abstract}
A scalar measurement can determine a robust decision without identifying
the latent covariance or every decision risk. We study finite Gaussian
projection experiments with a fixed mean-plus-standard-deviation target.
The observation laws define a set of indistinguishable moment models.
We exhibit decisions that agree throughout that set despite unidentified
risk values, and give a confidence-set certificate that returns a decision
or an unresolved outcome. We prove soundness, necessary abstention for
indistinguishable models with incompatible optima, and a fixed-checkpoint
certification guarantee. A scalar profile reduces nonlinear risk comparisons
to convex inner problems. Across seven synthetic instances, jointly calibrated
moment regions give sufficient budgets 21 to 52 times explicit expected-sample
information lower bounds. Separately measured stopping times are lower, with
zero decision errors in 1,900 simulated streams. These comparisons do not
constitute matching bounds, and the guarantees require Gaussian replication.
\end{abstract}
\section{Introduction}

In some experiments, a decision produces one scalar response while the random factors behind that response remain hidden. We study the case in which the response is a known projection of a fresh latent vector. The aim is to select a decision with the smallest mean-plus-standard-deviation risk. A measurement can be informative about this decision even when it cannot recover the latent covariance matrix or the risk of every decision.

The observation law is closely related to heteroskedastic linear experimental design. Weltz et al.~\citep{weltz2023} study Gaussian linear responses with an unknown quadratic variance and transductive best-arm identification. We retain this observation law. Our target is instead a mean-plus-standard-deviation comparison in which both the mean and covariance enter the decision criterion. We do not claim a new Gaussian observation model or a new general principle that decisions can be identified under partial parameter identification. Policy ranking under partial identification and decision-region active learning already express that broader principle~\citep{kasy2016,javdani2014}.

Our question is specific: given a set of scalar projections, when is the minimizing risk decision determined by their laws, and when can finite data certify it? Three distinctions matter. First, identifying the best decision can require fewer projections than identifying all decision risks. Second, a certificate that has not fired does not establish nonidentifiability. Third, some restricted query sets admit two models with identical response laws and different unique optima; on those instances an honest procedure must retain an unresolved outcome.

We develop the identification geometry for this projection model, a confidence-set certificate with an explicit unresolved output, and a fixed-checkpoint stopping guarantee for classes where the worst-case risk contrasts can be bounded globally. The certificate and its Gaussian coverage are stated later. Closed-form contrasts cover the studied examples. For affine moment models with convex confidence sets, a one-dimensional profile also gives globally boundable nonlinear contrasts; its cost beyond the small cases tested here remains unmeasured. Across seven identifiable Gaussian instances, one jointly calibrated moment region gives sufficient checkpoint budgets 21 to 52 times explicit information lower bounds. These compare different sample-complexity quantities and do not constitute a matching theorem. Instance definitions, calibrations, scripts, outputs, and source hashes are included in the reproducibility archive described in the appendix.

\section{Related work}
\label{sec:related}

Delage and Ye~\citep{delageye2010} constrain uncertain distributional
means and centered second moments. For our linear loss their moment set
reduces to the mean-plus-standard-deviation criterion below; its
second-moment constraint does not change the worst-case value. Kasy's
policy-ranking analysis~\citep{kasy2016} already shows that a decision
can be determined under partial identification. Our identification
statements concern the particular map from latent moments to Gaussian
projection laws and the resulting nonlinear risk contrasts.

Fixed-confidence best-arm identification supplies change-of-measure
lower bounds and stopping rules~\citep{garivier2016}. Soare et
al.~\citep{soare2014} and Fiez et al.~\citep{fiez2019} study linear
and transductive designs with separate measurement and decision sets.
Weltz et al.~\citep{weltz2023} include an unknown quadratic noise
variance in transductive identification, making their observation model
particularly close to ours. SPEED~\citep{mukherjee2024} also estimates
a quadratic variance structure, but designs samples to estimate a fixed
policy's value rather than certify a risk-minimizing decision. We do
not claim that replication or transductive allocation is new.

Decision-region active learning~\citep{javdani2014}, noisy Bayesian
active learning~\citep{golovin2010}, and noisy decision-tree
work~\citep{jia2024} explain why identifying a decision can require
less information than identifying a hypothesis. Their finite-hypothesis
and test models differ from the continuous Gaussian moment fiber here.
Risk-aware best-arm work already considers objectives beyond the
mean~\citep{kagrecha2019}. RAHBO~\citep{makarova2021} learns response
mean and input-dependent variance as functions; DRBO~\citep{kirschner2020}
addresses shifts in a context distribution. Neither gives the
finite-sample projection-fiber certificate studied here. Our earlier
GP comparisons adapted such methods to the present terminal objective;
they are not exact reproductions of the published algorithms.

\section{Model and objective}
\label{sec:model}

Let $\Q$ be a finite set of available queries and $\X$ a finite set of decisions. The known feature map $\varphi$ sends either kind of action to $\R^M$. At round $t$, the learner chooses $q_t\in\Q$, draws a fresh independent latent vector $\xi^{(t)}\sim\mathcal N(\mu^\star,\Sigma^\star)$, and observes only
\begin{equation}
 \begin{aligned}
 y_t&=\varphi(q_t)^\top\xi^{(t)},\\
 y_t\mid q_t&\sim\mathcal N(m_{q_t}^\star,v_{q_t}^\star),\\
 m_q^\star&=\varphi(q)^\top\mu^\star,\\
 v_q^\star&=\varphi(q)^\top\Sigma^\star\varphi(q).
 \end{aligned}
 \label{eq:observation}
\end{equation}
The choice of $q_t$ may depend on previous responses. The learner never sees $\xi^{(t)}$. We assume $v_q^\star>0$ for queried projections and $\Sigma^\star\succeq0$. Replication at a query gives independent draws from its marginal Gaussian law.

Write $\theta=(\mu,\Sigma)\in\Th\subseteq\R^M\times\mathbb S_+^M$.
Fix $\gamma_1>0$ and $\gamma_2\ge\max\{1,\gamma_1\}$, and let
$c=\sqrt{\gamma_1}$. We use the centered-moment ambiguity set of
Delage and Ye~\citep{delageye2010}, with unrestricted support:
\begin{equation}
 \mathcal P_\theta^{\rm DY}=\left\{Q:\begin{array}{l}
 \mathbb E_Q[\xi]=\mu+\Sigma^{1/2}u,\quad \|u\|_2^2\le\gamma_1,\\
 \mathbb E_Q[(\xi-\mu)(\xi-\mu)^\top]\preceq\gamma_2\Sigma
 \end{array}\right\}.
 \label{eq:dro-set}
\end{equation}
The condition $\gamma_2\ge\gamma_1$ makes the worst-case point mass
feasible. The condition $\gamma_2\ge1$ includes the reference law
$\mathcal N(\mu,\Sigma)$ in the ambiguity set.
For $\Sigma\succ0$, the first constraint is equivalently
$(\mathbb E_Q\xi-\mu)^\top\Sigma^{-1}(\mathbb E_Q\xi-\mu)
\le\gamma_1$. The displayed form also covers singular $\Sigma$.
For linear loss, its worst-case expected value is
\begin{equation}
 \begin{aligned}
 R_\theta(x)&=\sup_{Q\in\mathcal P_\theta^{\rm DY}}
 \mathbb E_Q[\varphi(x)^\top\xi]\\
 &=\varphi(x)^\top\mu+c\sqrt{\varphi(x)^\top\Sigma\varphi(x)}.
 \end{aligned}
 \label{eq:risk}
\end{equation}
Indeed, Cauchy--Schwarz bounds
$\varphi(x)^\top\Sigma^{1/2}u$ by
$c\sqrt{\varphi(x)^\top\Sigma\varphi(x)}$. Equality is attained at
an aligned $u$ by the point mass
$Q=\delta_{\mu+\Sigma^{1/2}u}$: its centered second moment is
$\Sigma^{1/2}uu^\top\Sigma^{1/2}\preceq
\gamma_1\Sigma\preceq\gamma_2\Sigma$.
Thus the second-moment constraint limits the ambiguity set but is
nonbinding for this linear objective. Equation~\eqref{eq:risk} is the
reduced evaluation criterion. It is distinct from the statistical
confidence set used later to learn $(\mu,\Sigma)$.
Our target is any $x^\star\in\arg\min_{x\in\X}R_{\theta^\star}(x)$. In the examples, $c=1$. A procedure may return a decision or the symbol $\bot$ (unresolved). It is $\delta$-correct if, for every $\theta\in\Th$,
\begin{equation}
 \Pr_\theta\!\left(\widehat x\notin
 \arg\min_{x\in\X}R_\theta(x),\ \widehat x\ne\bot\right)
 \le\delta.
 \label{eq:correctness}
\end{equation}
An unresolved output is different from a claim that the decision is structurally unidentifiable. The latter requires a pair of admissible models with identical laws at every available query and incompatible optimal decisions.

The risk in \eqref{eq:risk} is the fixed evaluation objective. Statistical confidence intervals used to select a decision do not change $c$ as the sample size grows. The finite Gaussian and replicated-query assumptions support the guarantees below; they do not cover the original one-shot, heavy-tailed regression experiments.

\section{Identification from queried projections}
\label{sec:identification}

\subsection{Fibers and two design conditions}

Suppose a policy repeatedly samples a subset $S\subseteq\Q$. Its population observation map and the fiber of models with the same sampled-query laws are
\begin{align}
 B_S(\theta)
 &=\bigl(\varphi(q)^\top\mu,
          \varphi(q)^\top\Sigma\varphi(q)\bigr)_{q\in S},
 \label{eq:moment-map}\\
 \F_S(\theta^\star)
 &=\{\theta\in\Th:B_S(\theta)=B_S(\theta^\star)\}.
 \label{eq:fiber}
\end{align}
Because Gaussian laws are determined by these means and variances, no amount of sampling in $S$ distinguishes models within $\F_S(\theta^\star)$. For candidate $x$ and competitor $z$, let
\begin{equation}
 \begin{aligned}
 g_{S,xz}(\theta^\star)
 &=\inf_{\theta\in\F_S(\theta^\star)}
 [R_\theta(z)-R_\theta(x)],\\
 \Delta_S(x;\theta^\star)
 &=\min_{z\in\X\setminus\{x\}}g_{S,xz}(\theta^\star).
 \end{aligned}
 \label{eq:margin}
\end{equation}
We call $S$ \emph{strictly decision-separating at $\theta^\star$} if $\Delta_S(x;\theta^\star)>0$ for some $x$. The same $x$ then uniquely minimizes risk at every model in the fiber. This is an instance condition: the algorithm does not know $B_S(\theta^\star)$ or $\Delta_S$ before collecting data.

A stronger property is \emph{contrast-value sufficiency}: every pairwise contrast $R_\theta(z)-R_\theta(x)$ is constant over $\F_S(\theta^\star)$. It identifies all risk differences, whereas decision separation only fixes their signs for one winning action. Full recovery of every decision mean and variance is stronger still. For example, if every decision feature belongs to the span of sampled query features and every decision quadratic feature $\varphi(x)\varphi(x)^\top$ belongs to the span of $\{\varphi(q)\varphi(q)^\top:q\in S\}$, then all decision means and variances are identified. Neither span requirement is necessary for decision separation.

At runtime, a confidence set $\mathcal C_t$ replaces the unknown fiber. A sound certificate checks whether
\begin{equation}
 \inf_{\theta\in\mathcal C_t}
 \bigl[R_\theta(z)-R_\theta(x)\bigr]>0
 \quad\text{for every }z\ne x.
 \label{eq:certificate}
\end{equation}
If the confidence set contains $\theta^\star$, this check cannot certify a wrong decision. Its failure means unresolved. The population margin in \eqref{eq:margin} governs whether such a check can eventually succeed as the sampled-query intervals shrink. Evaluating \eqref{eq:certificate} globally is nonconvex for general pairs; our runnable certificate uses valid closed-form lower bounds for the families analyzed below.

\subsection{A projection-model separation}

The following example separates decision identification from recovery of decision variances and contrast values. Its scope is the specified projection model, not a general theorem about partial identification.

\begin{proposition}[Strict decision separation without risk-value recovery]
\label{prop:strictly-weaker}
Let $M\ge2$, $c=1$, and let the two decision features be
$a_1=e_1+0.2e_2$ and $a_2=0.2e_1+e_2$. Query $S=\{e_1,e_2\}$. At the reference model, let $\mu_1=0$, $\mu_2=b$, $\Sigma_{11}=1$, and $\Sigma_{22}=0.04$. The decision features have zero coordinates outside $\{e_1,e_2\}$. Let $\Sigma$ be block diagonal between these two coordinates and the rest, with a fixed positive-semidefinite remaining block. The unobserved entry $s=\Sigma_{12}$ then ranges over the full interval $[-0.2,0.2]$.
Then
\begin{equation}
 \Delta_S(a_2)=0.8(0.8-b),
 \qquad
 \Delta_S(a_1)=0.8(b-1.2).
 \label{eq:example-margins}
\end{equation}
Thus $S$ strictly identifies $a_2$ when $b<0.8$ and $a_1$ when $b>1.2$, although neither decision variance is identified. At $b=0.4$, $S$ is decision-separating with margin $0.32$ but is not contrast-value sufficient: the contrast at $s=0$ is approximately $0.398$. At $b=0.9$, admissible models with $s=-0.18$ and $s=+0.18$ have identical laws at $S$ and opposite unique optima.
\end{proposition}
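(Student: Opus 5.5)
The plan is to reduce everything to one scalar function of the unobserved covariance entry $s=\Sigma_{12}$ and then evaluate it at its endpoints and at a few interior points.

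First I describe the fiber $\F_S$. The queries $e_1,e_2$ fix $\mu_1=0$, $\mu_2=b$, $\Sigma_{11}=1$ and $\Sigma_{22}=0.04$. The decision features vanish outside the first two coordinates and the remaining block is fixed, so the only free parameter that affects $R_\theta(a_1)$ or $R_\theta(a_2)$ is $s$. Positive semidefiniteness of the $2\times2$ block is $s^2\le 1\cdot 0.04$, which gives exactly $s\in[-0.2,0.2]$. The block-diagonal structure lets each such $s$ be realized by an admissible $\theta$.

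Next I compute the risks directly from \eqref{eq:risk} with $c=1$. The decision means are $\varphi(a_1)^\top\mu=0.2b$ and $\varphi(a_2)^\top\mu=b$. The decision variances are
\begin{equation*}
A(s)=1.0016+0.4s,\qquad B(s)=0.08+0.4s.
\end{equation*}
Hence the contrast is
\begin{equation*}
R_\theta(a_1)-R_\theta(a_2)=-0.8b+f(s),\qquad f(s)=\sqrt{A(s)}-\sqrt{B(s)}.
\end{equation*}
Neither variance is constant on the fiber, which already shows that the decision variances are not identified.

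The key step is monotonicity of $f$. Since
\begin{equation*}
f'(s)=0.2\bigl(A(s)^{-1/2}-B(s)^{-1/2}\bigr)
\end{equation*}
and $A>B$, $f$ is strictly decreasing on the interval, with $B(-0.2)=0$ handled as a one-sided limit. So the infimum and supremum of $f$ over the fiber are attained at the endpoints. At $s=0.2$ we get $f=\sqrt{1.0816}-\sqrt{0.16}=1.04-0.4=0.64$. At $s=-0.2$ we get $f=\sqrt{0.9216}-0=0.96$. Therefore
\begin{equation*}
\Delta_S(a_2)=\inf_s\bigl[R(a_1)-R(a_2)\bigr]=0.64-0.8b=0.8(0.8-b),
\end{equation*}
\begin{equation*}
\Delta_S(a_1)=\inf_s\bigl[R(a_2)-R(a_1)\bigr]=0.8b-0.96=0.8(b-1.2),
\end{equation*}
which is \eqref{eq:example-margins}. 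The strict identification claims for $b<0.8$ and $b>1.2$ follow immediately.

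For the remaining numerical claims:
\begin{itemize}
\item At $b=0.4$ the margin is $0.8\cdot0.4=0.32$. At $s=0$ the contrast is $\sqrt{1.0016}-\sqrt{0.08}-0.32\approx0.398\neq0.32$, so the contrast is not constant over the fiber and $S$ is not contrast-value sufficient.
\item At $b=0.9$ the contrast at $s=-0.18$ is $\sqrt{0.9296}-\sqrt{0.008}-0.72\approx0.155>0$, so $a_2$ is the unique optimum. At $s=0.18$ it is $\sqrt{1.0736}-\sqrt{0.152}-0.72\approx-0.074<0$, so $a_1$ is the unique optimum. Both models lie in the same fiber, so their laws at $S$ coincide.
\end{itemize}

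The main obstacle is only the monotonicity step, including the endpoint $s=-0.2$ where $B=0$ and $f$ is not differentiable. I would handle it either by the one-sided derivative argument above or by noting that $\sqrt{A}-\sqrt{B}=(A-B)/(\sqrt A+\sqrt B)$. Here $A-B=0.9216$ is constant and the denominator is increasing in $s$, which gives monotonicity without calculus.
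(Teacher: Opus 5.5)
Your proof is correct and follows the paper's argument essentially step for step: the same affine decision variances $1.0016+0.4s$ and $0.08+0.4s$, the same derivative showing that the contrast is strictly decreasing in $s$, and the same endpoint evaluations at $s=\pm0.2$. Two of your steps are slightly tighter than the paper's. You evaluate the contrast directly at $s=\pm0.18$, whereas the paper appeals to monotonicity there, which by itself only orders the two values. Your identity $\sqrt A-\sqrt B=(A-B)/(\sqrt A+\sqrt B)$ also handles the non-differentiable endpoint $s=-0.2$ more cleanly than the one-sided limit.
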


\paragraph{Proof.} See Appendix~\ref{app:proof1}.

If the available query set contains the basis vectors and normalized pair sums except $(e_1+e_2)/\sqrt2$, then at $M=8$ it contains 35 queries, but none observes $s$ in this family. Sampling only $e_1,e_2$ is therefore enough for the strict-decision cases, while the $b=0.9$ witness remains indistinguishable even if all 35 available queries are sampled. Adding the missing pair sum reveals $s$ through its response variance. The unresolved region is consequently a property of the query set and model class, not of the decision vectors alone.

\begin{proposition}[One-dimensional profile for affine moment fibers]
\label{prop:contrast-profile}
Let $\mathcal C$ be a nonempty compact convex set of moment parameters. Suppose the mean contrast $\ell(\theta)=\varphi(z)^\top\mu-\varphi(x)^\top\mu$ and the decision variances $v_x(\theta)$ and $v_z(\theta)$ are affine on $\mathcal C$, with $v_x(\theta)\ge0$ and $v_z(\theta)\ge\underline v_z>0$. Set $\overline v_z=\max_{\theta\in\mathcal C}v_z(\theta)$. Then the global worst-case contrast equals
\begin{equation}
\begin{aligned}
&\inf_{\theta\in\mathcal C}[R_\theta(z)-R_\theta(x)]
=\min_{\tau\in I}\min_{\theta\in\mathcal C}h_\tau(\theta),\\
&I=[\sqrt{\underline v_z},\sqrt{\overline v_z}],\\
&h_\tau(\theta)=\ell(\theta)+\frac{c v_z(\theta)}{2\tau}
+\frac{c\tau}{2}-c\sqrt{v_x(\theta)}.
\end{aligned}
\label{eq:contrast-profile}
\end{equation}
For fixed $\tau$, the inner problem is convex. The outer search is scalar but need not be convex. Consequently, a globally certified one-dimensional search with valid lower bounds on the inner problems gives a global bound for \eqref{eq:certificate}, at any fiber dimension.
\end{proposition}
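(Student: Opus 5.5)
The plan is to write the square root of the competitor variance as a minimum over a scalar, using the arithmetic–geometric mean inequality. For any $v>0$ and $\tau>0$,
\[
\sqrt v=\min_{\tau>0}\Bigl(\frac{v}{2\tau}+\frac{\tau}{2}\Bigr),
\]
because $\frac{v}{2\tau}+\frac{\tau}{2}\ge\sqrt v$, with equality exactly at $\tau=\sqrt v$. Since $v_z(\theta)\in[\underline v_z,\overline v_z]$ on $\mathcal C$, the minimizer $\tau=\sqrt{v_z(\theta)}$ always lies in $I$, so the minimum may be restricted to $\tau\in I$. The quantity $\overline v_z$ is finite and attained because $v_z$ is affine, hence continuous, on the compact set $\mathcal C$. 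Multiplying by $c>0$ and substituting into $R_\theta(z)-R_\theta(x)=\ell(\theta)+c\sqrt{v_z(\theta)}-c\sqrt{v_x(\theta)}$ gives, for every $\theta\in\mathcal C$,
\[
R_\theta(z)-R_\theta(x)=\min_{\tau\in I}h_\tau(\theta).
\]

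The next step is to exchange the infimum over $\theta$ with the minimum over $\tau$:
\[
\inf_{\theta\in\mathcal C}\min_{\tau\in I}h_\tau(\theta)=\inf_{\tau\in I}\inf_{\theta\in\mathcal C}h_\tau(\theta).
\]
This is always valid for a joint infimum. To replace both infima by minima, I will note that $(\tau,\theta)\mapsto h_\tau(\theta)$ is continuous on the compact set $I\times\mathcal C$. Here $\tau\ge\sqrt{\underline v_z}>0$, $v_x\ge0$ makes $\sqrt{v_x}$ continuous, and $\ell$, $v_z$ are affine. Hence the joint minimum is attained, so the left-hand infimum is a minimum. The same argument shows the inner minimum is attained for each fixed $\tau$, and that $\tau\mapsto\min_\theta h_\tau(\theta)$ is continuous, because it is a partial minimum of a jointly continuous function over a compact set. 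Therefore the outer minimum is attained as well.

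Convexity of the inner problem for fixed $\tau$ comes term by term. The terms $\ell(\theta)$ and $c\,v_z(\theta)/(2\tau)$ are affine, and $c\tau/2$ is constant. The term $-c\sqrt{v_x(\theta)}$ is convex, since $\sqrt{\cdot}$ is concave and nondecreasing on $[0,\infty)$ and $v_x$ is affine, so $\sqrt{v_x}$ is concave. Together with convexity of $\mathcal C$, this makes the inner problem a convex minimization.

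The remark that the outer problem need not be convex needs only a small example, such as $\mathcal C$ a segment, or simply the observation that partial minimization of a function that is not jointly convex in $(\tau,\theta)$ gives no convexity guarantee in $\tau$. I expect this to be the main delicate point, because $v_z/\tau$ is jointly convex but $-\sqrt{v_x}$ couples only through $\theta$, so a concrete counterexample may be needed. Finally, suppose a certified one-dimensional search produces a finite cover of $I$ together with valid lower bounds on $\min_\theta h_\tau(\theta)$ over each piece; such bounds can come from the Lipschitz dependence of $h_\tau$ on $\tau$ for $\tau\ge\sqrt{\underline v_z}$. The minimum of these bounds then lower-bounds the global contrast by the identity above, which gives a valid lower bound for \eqref{eq:certificate} regardless of the dimension of $\mathcal C$.
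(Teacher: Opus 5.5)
Your proof of the identity, of convexity of the fixed-$\tau$ inner problem, and of validity of a certified outer search is correct and follows the paper's route. The common steps are the representation $\sqrt v=\min_{\tau>0}\{v/(2\tau)+\tau/2\}$ with minimizer $\sqrt{v_z(\theta)}\in I$, the interchange of the two minima over $I\times\mathcal C$, and concavity of $\sqrt{v_x}$. Your compactness argument for attainment fills in what the paper leaves implicit.

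The one methodological difference is how you bound an outer interval $[a,b]$. You use a Lipschitz constant in $\tau$, which is valid since $|\partial_\tau h_\tau(\theta)|\le\tfrac c2(\overline v_z/\underline v_z-1)$ on $I\times\mathcal C$. The paper instead replaces $v/(2\tau)+\tau/2$ by its tangent at the midpoint $m$, minimized over the endpoints, $\min_{e\in\{a,b\}}\{e/2+(2m-e)v/(2m^2)\}$. That gives two convex inner problems per interval, and the slope adapts to $v_z(\theta)$ rather than using a worst-case constant.

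The part that needs repair is the ``need not be convex'' remark. Your claim that $v_z(\theta)/\tau$ is jointly convex is false: the Hessian of $(v,\tau)\mapsto v/\tau$ has determinant $-1/\tau^4<0$. If it were true, $h$ would be jointly convex on the convex set $I\times\mathcal C$. The partial minimum $g(\tau)=\min_{\theta\in\mathcal C}h_\tau(\theta)$ would then be convex, which is the opposite of the claim.

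The failure of joint convexity in that coupling term is exactly what permits non-convexity, and your segment idea gives a counterexample at once. Take $c=1$, $\varphi(x)=e_1$, $\varphi(z)=e_2$, $\mu(t)=(t,0)$, $\Sigma(t)=\operatorname{diag}(1,1+3t)$, $t\in[0,1]$. Then $I=[1,2]$ and $h_\tau$ is affine in $t$ with slope $3/(2\tau)-1$, so $g(1)=g(2)=0$ but $g(3/2)=1/12$.

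The paper asserts the non-convexity without proof, so this is not a gap relative to it. Still, your stated heuristic should be replaced by such an example.
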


\paragraph{Proof.} See Appendix~\ref{app:proof2}.

\begin{remark}[Computational scope]
The profile applies to $\Sigma(\beta)=\Sigma_0+\sum_{k=1}^r\beta_k A_k$ whenever the parameter restrictions and statistical confidence set are convex. It does not require $r=1$ or an endpoint minimizer. A variance floor for $v_z$ gives the compact positive outer interval above; if $v_z$ can vanish, the $\tau\downarrow0$ limit needs separate handling. A numerical certificate requires valid lower bounds from the convex inner solves and the outer search. The small-dimensional tests below do not establish favorable scaling for arbitrary $r$. The alternative-model KL problem for adaptive sampling is separate and is not made convex by this proposition.
\end{remark}

\section{A confidence-set decision certificate}
\label{sec:algorithm}

Fix a queried subset $S\subseteq\Q$ and a finite, predeclared list of
checkpoints $N_1<\cdots<N_J$. The allocation specifies the number
$n_{q,j}\ge2$ of observations at each $q\in S$ by checkpoint $j$.
All these counts are fixed before responses are seen. The final decision
may still be returned at the first checkpoint where a certificate succeeds.
We work with a compact convex model class
$\Th\subseteq\R^M\times\mathbb S_+^M$ containing $\theta^\star$.

At checkpoint $j$, let $\bar y_{q,j}$ and $s_{q,j}^2$ be the sample mean
and unbiased sample variance for query $q$. Set
\[
 L=\log\frac{4|S|J}{\delta},\qquad
 \epsilon_{q,j}=2\sqrt{\frac{L}{n_{q,j}-1}}
             +\frac{2L}{n_{q,j}-1}.
\]
When $\epsilon_{q,j}<1$, define the query-variance upper bound and the
two intervals
\begin{equation}
\begin{aligned}
 U^v_{q,j}&=\frac{s_{q,j}^2}{1-\epsilon_{q,j}},\\
 I^v_{q,j}&=\left[\frac{s_{q,j}^2}{1+\epsilon_{q,j}},U^v_{q,j}\right],\\
 h_{q,j}&=\sqrt{2U^v_{q,j}L/n_{q,j}},\\
 I^m_{q,j}&=[\bar y_{q,j}-h_{q,j},\bar y_{q,j}+h_{q,j}].
\end{aligned}
\label{eq:query-intervals}
\end{equation}
Otherwise the checkpoint returns unresolved. Intersecting the interval
constraints with the model class gives
\begin{equation}
 \mathcal C_j=\left\{\theta\in\Th:
 \begin{array}{l}
 m_q(\theta)\in I^m_{q,j},\\
 v_q(\theta)\in I^v_{q,j},\quad q\in S
 \end{array}\right\}.
 \label{eq:confidence-set}
\end{equation}
This is convex: the query means and variances are affine in
$(\mu,\Sigma)$, and $\Th$ is convex. The set describes statistical
uncertainty about $\theta^\star$. It is distinct from the distributional
ambiguity set that defines $R_\theta$.

\paragraph{Joint moment region.}
For fixed checkpoint counts, we also use a convex ellipsoid over the
queried moments. Put $\nu_{q,j}=n_{q,j}-1$ and split
$\delta=\delta_s+\delta_p+\delta_c$. With
$a=\delta_s/(2|S|J)$, let
\[
 \begin{aligned}
 U_{q,j}&=\frac{\nu_{q,j}s_{q,j}^2}{\chi^2_{\nu_{q,j},a}},\\
 D_j&=\operatorname{diag}_{q\in S}\left(
 \sqrt{U_{q,j}/n_{q,j}},\,
 U_{q,j}\sqrt{2/\nu_{q,j}}\right).
 \end{aligned}
\]
Here $\chi^2_{\nu,a}$ is the lower $a$ quantile. Stack query means
and variances as $B_S(\theta)$, and stack their sample estimates as
$\widehat B_j$. We calibrate a radius $c_j$ offline, using the exact
Gaussian and chi-square pivot distribution at the predeclared counts.
The resulting set is
\begin{equation}
 \mathcal C_j^{\rm joint}=\left\{\theta\in\Th:
 \left\|D_j^{-1}\bigl(B_S(\theta)-\widehat B_j\bigr)
 \right\|_2^2\le c_j\right\}.
 \label{eq:joint-confidence}
\end{equation}
The observed scales $U_{q,j}$ are fixed within each optimization, so
this set is convex. Proposition~\ref{prop:joint-coverage} gives its
finite-sample coverage and the corresponding stopping condition. A
random calibration sample is included in that probability statement.

Choose $\mathcal H_j$ as either $\mathcal C_j$ or
$\mathcal C_j^{\rm joint}$. The same decision rule applies to both.
For $x\ne z$, compute a valid global lower bound
\[
 \underline D_j(x,z)\le
 \inf_{\theta\in\mathcal H_j}[R_\theta(z)-R_\theta(x)].
\]
The box bound from separate per-decision mean and variance extrema is a
cheap first check. If it does not certify, Proposition~\ref{prop:contrast-profile}
provides an exact scalar profile whenever a positive lower bound on the
competitor variance holds over $\mathcal H_j$. Each fixed-scalar inner
problem is convex; interval lower bounds make the outer search sound.
If the variance floor cannot be established, the procedure may use another
sound lower bound or return unresolved. A failed numerical search is never
treated as a certificate.

The procedure checks candidates $x\in\X$ and returns the first one for
which $\underline D_j(x,z)>0$ for every $z\ne x$. If no candidate passes by
$N_J$, or if $\mathcal H_j$ is empty, it returns $\bot$. The same rule can
use response-adaptive queries when intervals cover every query and every
possible prefix count; our smaller fixed-checkpoint bounds use the
predeclared allocation above. The experiments below use a fixed design
because adaptive tracking did not improve reliably on the tested family.

\section{Guarantees}
\label{sec:guarantees}

The statements below concern the finite-query Gaussian model and the
confidence-set rule of Section~\ref{sec:algorithm}. They do not transfer
to one-shot squared-response regression or heavy-tailed observations.

\begin{theorem}[Certificate soundness]
\label{thm:soundness}
Let $\mathcal H_j$ be any data-dependent confidence sets (take
$\mathcal H_j=\Th$ at an ineligible checkpoint, with no certification)
such that
$\Pr_{\theta^\star}(\bigcap_{j=1}^J\{\theta^\star\in\mathcal H_j\})
\ge1-\delta$ for every $\theta^\star\in\Th$.
If each computed $\underline D_j(x,z)$ is a valid lower bound on the
global contrast over $\mathcal H_j$, then the decision rule is
$\delta$-correct in the sense of \eqref{eq:correctness}, regardless of
which checkpoint first certifies. This also applies to response-adaptive
queries when the chosen confidence sets retain simultaneous coverage
under that sampling rule.
\end{theorem}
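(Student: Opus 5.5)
The plan is to reduce the error event to the non-coverage event. Let $E=\bigcap_{j=1}^J\{\theta^\star\in\mathcal H_j\}$ be the simultaneous-coverage event; by hypothesis $\Pr_{\theta^\star}(E^c)\le\delta$. I will show that on $E$ the returned output $\widehat x$ is either $\bot$ or a minimizer of $R_{\theta^\star}$. Then
\[
\Pr_{\theta^\star}\bigl(\widehat x\notin\arg\min_{x}R_{\theta^\star}(x),\ \widehat x\neq\bot\bigr)\le\Pr_{\theta^\star}(E^c)\le\delta .
\]
This is exactly \eqref{eq:correctness} with $\theta=\theta^\star$. Since $\theta^\star\in\Th$ is arbitrary, the rule is $\delta$-correct.

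The key deterministic step runs as follows. Suppose the rule returns $\widehat x=x$ at some checkpoint $j$, which may be random and data dependent. The rule certifies only when $\mathcal H_j$ is nonempty and $\underline D_j(x,z)>0$ for every $z\ne x$. At an ineligible checkpoint nothing is certified by convention, so $j$ is necessarily an eligible checkpoint. On $E$ we have $\theta^\star\in\mathcal H_j$. Validity of the lower bound then gives, for every $z\ne x$,
\[
R_{\theta^\star}(z)-R_{\theta^\star}(x)\;\ge\;\inf_{\theta\in\mathcal H_j}[R_\theta(z)-R_\theta(x)]\;\ge\;\underline D_j(x,z)\;>\;0 .
\]
Hence $x$ is the unique minimizer of $R_{\theta^\star}$ over the finite set $\X$, and in particular $x\in\arg\min R_{\theta^\star}$.

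The point needing care is that the certifying checkpoint is random, and several candidates or checkpoints could in principle be examined. This is the reason the hypothesis is simultaneous coverage over all $j$ rather than per-checkpoint coverage. On $E$ the argument above holds for every checkpoint and every candidate at once, so the outcome does not depend on which checkpoint fires first or on the candidate order. No union bound over the stopping time is needed beyond what is already built into the coverage of $E$. I would state this explicitly, since it is the only place where optional stopping could otherwise inflate the error.

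For response-adaptive queries the deterministic implication is unchanged: it uses only $\theta^\star\in\mathcal H_j$ and the validity of $\underline D_j$. The probability bound therefore carries over verbatim once the chosen sets satisfy $\Pr_{\theta^\star}(E)\ge1-\delta$ under the adaptive sampling law, which is precisely the stated assumption. I do not expect a real obstacle. The only subtle points are the ineligible-checkpoint convention and the emptiness check, which ensure that vacuous infima over an empty $\mathcal H_j$ (equal to $+\infty$) never produce a certificate.
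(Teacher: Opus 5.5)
Your proposal is correct and matches the paper's proof: on the simultaneous-coverage event, the chain $R_{\theta^\star}(z)-R_{\theta^\star}(x)\ge\inf_{\theta\in\mathcal H_j}[R_\theta(z)-R_\theta(x)]\ge\underline D_j(x,z)>0$ makes any returned decision optimal, so a wrong non-$\bot$ output forces non-coverage, which has probability at most $\delta$. Your added remarks on the random certifying checkpoint, the ineligible-checkpoint convention, and adaptive sampling spell out what the simultaneous-coverage hypothesis already covers.
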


\paragraph{Proof.} See Appendix~\ref{app:proof3}.

For the box choice $\mathcal H_j=\mathcal C_j$, Gaussian mean tails and
the two Laurent--Massart chi-square tails each fail with probability
at most $2e^{-L}$ per query-checkpoint pair. On their intersection,
$v_q^\star\in I^v_{q,j}$ and $m_q^\star\in I^m_{q,j}$ because
$U^v_{q,j}\ge v_q^\star$. A union bound over $|S|J$ pairs with the
stated $L$ gives the coverage premise. For adaptive queries, one can
pregenerate an independent Gaussian response stack for each $q$:
fresh latent draws and one observed query per round make this coupling
valid. Covering every permitted stack prefix then covers the selected
counts. For $\mathcal H_j=\mathcal C_j^{\rm joint}$, the coverage premise
follows instead from Proposition~\ref{prop:joint-coverage}, including
the randomness of offline calibration.

\begin{theorem}[Necessary abstention relative to the query set]
\label{thm:abstention}
Suppose two models $\theta_0,\theta_1\in\Th$ have identical response
laws at every query in $S$, but their sets of minimizing decisions are
disjoint. Any rule that queries only $S$ and is $\delta$-correct under
both models must satisfy
\[
 \Pr_{\theta_i}(\widehat x=\bot)\ge1-2\delta,
 \qquad i\in\{0,1\}.
\]
\end{theorem}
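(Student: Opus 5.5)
The argument rests on one observation: a rule that queries only $S$ produces an output $\widehat x$ whose distribution is the same under $\theta_0$ and $\theta_1$. We establish this first and then apply $\delta$-correctness under both models to the disjoint optimal sets.

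\emph{Step 1: equality of output laws.} The rule may choose queries adaptively within $S$, use internal randomization, and stop at a data-dependent time. At each round the next query, the stopping decision, and the output are measurable functions of the past responses and the internal randomness. Under either model, a response to query $q\in S$ is a fresh draw from the law $\mathcal N(m_q,v_q)$, independent of the past given $q$. By hypothesis this law is the same under $\theta_0$ and $\theta_1$. Induction on the round index then shows that the joint law of (internal randomness, $q_1,y_1,\dots,q_t,y_t$) agrees under both models for every $t$. Hence the full transcript has the same law, and so does $\widehat x$. If the rule can run forever, treat non-termination as output $\bot$; the event is determined by the transcript, so its law also agrees. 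Write $P$ for this common law of $\widehat x$.

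\emph{Step 2: the probability bound.} Let $A_i=\arg\min_{x\in\X}R_{\theta_i}(x)$, so that $A_0\cap A_1=\emptyset$. Applying \eqref{eq:correctness} under $\theta_i$ gives
\[
P(\widehat x\ne\bot,\ \widehat x\notin A_i)\le\delta,\qquad i\in\{0,1\}.
\]
Every decision $\widehat x\ne\bot$ lies outside $A_0$ or outside $A_1$, by disjointness. Therefore
\[
P(\widehat x\ne\bot)\le P(\widehat x\ne\bot,\widehat x\notin A_0)+P(\widehat x\ne\bot,\widehat x\notin A_1)\le2\delta.
\]
Since $P$ is the law of $\widehat x$ under each $\theta_i$, this yields $\Pr_{\theta_i}(\widehat x=\bot)\ge1-2\delta$ for both $i$.

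The main obstacle is making Step 1 rigorous for adaptive sampling and random stopping times. The cleanest route is the stack coupling used after Theorem~\ref{thm:soundness}: pregenerate, for each $q\in S$, an i.i.d.\ sequence from the common law of $y\mid q$, and realize the rule as a deterministic function of these stacks and the internal randomness. The stacks have the same joint law under $\theta_0$ and $\theta_1$, so any functional of them, including $\widehat x$, has the same law under both models. This equivalently says that the likelihood ratio between the two models is identically $1$, so no change-of-measure inequality is needed.
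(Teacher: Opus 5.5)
Your proposal is correct and follows essentially the same route as the paper: the full adaptive transcript, and hence $\widehat x$, has a common law under $\theta_0$ and $\theta_1$. Disjointness of the optimal sets then places every non-$\bot$ output in one of the two misidentification events, each of probability at most $\delta$, giving $\Pr(\widehat x\ne\bot)\le2\delta$. Your stack-coupling justification of the transcript equality and your treatment of non-termination as $\bot$ fill in details that the paper only asserts.
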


\paragraph{Proof.} See Appendix~\ref{app:proof4}.

For a stopping statement, write
$\Delta_{xz}=g_{S,xz}(\theta^\star)$ and
$\Delta_S=\min_{z\ne x^\star}\Delta_{x^\star z}$.
The next theorem accepts any simultaneously valid confidence set and
an enclosing set fixed before online responses. The enclosing set may
depend on offline calibration.

\begin{theorem}[Fixed-checkpoint certification from an outer set]
\label{thm:stopping}
Let $\mathcal H_j$ be the confidence sets used by the rule and let
$\mathcal E_j\subseteq\Th$ be outer sets fixed before online responses.
Suppose the event
$\theta^\star\in\mathcal H_j\subseteq\mathcal E_j$ at every eligible
checkpoint has probability at least $1-\delta$. If
\begin{equation}
 \min_{z\ne x^\star}\inf_{\theta\in\mathcal E_j}
 [R_\theta(z)-R_\theta(x^\star)]>\eta_j,
 \label{eq:outer-stopping}
\end{equation}
then, on that event, the rule certifies $x^\star$ by $N_j$, provided
the global solver returns lower bounds within $\eta_j\ge0$ of the
exact infima over $\mathcal H_j$. For a deterministic outer set this
is a $1-\delta$ sufficient budget. If the outer set depends on offline
calibration, the conclusion is a simultaneous implication over
calibration and data.

For the coordinatewise choice $\mathcal H_j=\mathcal C_j$, suppose
$\Delta_S>0$ and that, throughout a neighborhood of the fiber
$\F_S(\theta^\star)$, every competitor $z\ne x^\star$ obeys
\begin{equation}
 \begin{aligned}
 R_\theta(z)-R_\theta(x^\star)
 &\ge\Delta_{x^\star z}\\
 &\quad-\sum_{q\in S}L^m_{qz}|m_q(\theta)-m_q^\star|\\
 &\quad-\sum_{q\in S}L^v_{qz}|v_q(\theta)-v_q^\star|,
 \end{aligned}
 \label{eq:local-modulus}
\end{equation}
where $L^m_{qz},L^v_{qz}\ge0$ are finite. At checkpoint $j$, put
\[
 d^m_{q,j}=2\sqrt{\frac{2v_q^\star L}{n_{q,j}}
                 \frac{1+\epsilon_{q,j}}{1-\epsilon_{q,j}}},
 \qquad
 d^v_{q,j}=\frac{2\epsilon_{q,j}v_q^\star}{1-\epsilon_{q,j}}.
\]
Let $\mathcal E_j^{\rm box}$ be the set of models in $\Th$ whose queried
means and variances differ from the true ones by at most
$d^m_{q,j}$ and $d^v_{q,j}$, respectively. If $\epsilon_{q,j}<1$
for all $q$, $\mathcal E_j^{\rm box}$ lies in the stated neighborhood,
and
\begin{equation}
 \min_{z\ne x^\star}\left\{
 \Delta_{x^\star z}-\sum_{q\in S}
 (L^m_{qz}d^m_{q,j}+L^v_{qz}d^v_{q,j})\right\}>\eta_j,
 \label{eq:stopping-condition}
\end{equation}
then \eqref{eq:outer-stopping} holds for the coordinatewise outer set,
and the rule certifies $x^\star$ by $N_j$ with probability at least
$1-\delta$. Neither statement bounds expected stopping time.
\end{theorem}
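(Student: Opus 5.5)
The proof has two parts: a general implication from an outer set, and a verification that the coordinatewise box satisfies its hypotheses.

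\emph{Outer-set implication.} Work on the event $\mathcal G=\{\theta^\star\in\mathcal H_j\subseteq\mathcal E_j \text{ at every eligible checkpoint}\}$, which has probability at least $1-\delta$ by hypothesis. Fix a checkpoint $j$ satisfying \eqref{eq:outer-stopping} and a competitor $z\ne x^\star$. Because $\mathcal H_j\subseteq\mathcal E_j$, infima are monotone under inclusion, so
\[
 \inf_{\theta\in\mathcal H_j}[R_\theta(z)-R_\theta(x^\star)]
 \ge\inf_{\theta\in\mathcal E_j}[R_\theta(z)-R_\theta(x^\star)]>\eta_j .
\]
The solver returns $\underline D_j(x^\star,z)$ within $\eta_j$ of the exact infimum over $\mathcal H_j$, so $\underline D_j(x^\star,z)>0$ for every $z$. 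Since $\theta^\star\in\mathcal H_j$, the set $\mathcal H_j$ is nonempty, so the rule is not forced to return $\bot$.

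The rule scans candidates in some order and returns the first that passes. I therefore need to rule out certification of a different candidate $x$ at or before $N_j$. On $\mathcal G$, a certified $x$ satisfies $R_{\theta^\star}(z)-R_{\theta^\star}(x)>0$ for all $z\ne x$. It is then the unique minimizer, so $x=x^\star$, by the same argument as Theorem~\ref{thm:soundness}. Hence the output at the first certifying checkpoint, which is at most $N_j$, is $x^\star$.

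For a deterministic $\mathcal E_j$ this gives a $1-\delta$ sufficient budget. If $\mathcal E_j$ depends on offline calibration, the same argument runs pointwise on the joint calibration-and-data event, which yields the stated simultaneous implication.

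\emph{Coordinatewise box.} I take $\mathcal E_j=\mathcal E_j^{\rm box}$, which is fixed before responses because it depends only on $v_q^\star$, $L$ and the counts. The task is to show that the coverage event of Theorem~\ref{thm:soundness}'s discussion implies $\mathcal C_j\subseteq\mathcal E_j^{\rm box}$. That event is the intersection of the Gaussian mean tails and the two Laurent--Massart tails.

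For variances, on the chi-square event $s_{q,j}^2\in[v_q^\star(1-\epsilon),v_q^\star(1+\epsilon)]$, which is the form that makes $v_q^\star\in I^v_{q,j}$. Any $v$ in $I^v_{q,j}$ then satisfies
\[
 |v-v_q^\star|\le U^v_{q,j}-\frac{s^2}{1+\epsilon}
 \le \frac{s^2\,2\epsilon}{(1-\epsilon)(1+\epsilon)}
 \le\frac{2\epsilon v_q^\star}{1-\epsilon}=d^v_{q,j}.
\]
For means, any $m\in I^m_{q,j}$ satisfies $|m-m_q^\star|\le 2h_{q,j}$, because $m_q^\star$ also lies in the interval. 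I then bound $U^v\le v_q^\star(1+\epsilon)/(1-\epsilon)$, which gives $2h_{q,j}\le d^m_{q,j}$.

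\emph{Main obstacle.} The step I expect to need the most care is aligning the chi-square event with the exact $(1\pm\epsilon)$ multiplicative form used in the definitions. The Laurent--Massart tails for $\nu s^2/v^\star\sim\chi^2_\nu$ at level $e^{-L}$ give deviations $2\sqrt{\nu L}+2L$ upward and $2\sqrt{\nu L}$ downward. Dividing by $\nu$ yields at most $\epsilon_{q,j}$ in each direction, and that suffices.

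\emph{Final bound.} Every $\theta\in\mathcal E_j^{\rm box}$ lies in the neighborhood by assumption. Applying \eqref{eq:local-modulus} termwise with the deviations bounded by $d^m_{q,j}$ and $d^v_{q,j}$, and then taking the infimum over $\theta$ and the minimum over $z$, gives a quantity at least the left side of \eqref{eq:stopping-condition}, which exceeds $\eta_j$. So \eqref{eq:outer-stopping} holds, and the first part yields certification of $x^\star$ by $N_j$ with probability at least $1-\delta$.

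Nothing in this argument controls the random stopping index beyond the deterministic $N_j$, so no expected-stopping-time bound follows.
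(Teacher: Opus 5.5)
Your proposal is correct and follows the paper's proof. The generic part uses monotonicity of the infimum under $\mathcal H_j\subseteq\mathcal E_j$ together with the solver tolerance $\eta_j$; the box part uses the chi-square and Gaussian mean events to place $\mathcal C_j$ inside $\mathcal E_j^{\rm box}$, then applies \eqref{eq:local-modulus} to obtain \eqref{eq:outer-stopping}. Your mean-deviation bound $2h_{q,j}$ is slightly looser than the paper's $\sqrt{2v_q^\star L/n_{q,j}}+h_{q,j}$ but still at most $d^m_{q,j}$, and your check that no other candidate can be certified earlier on the coverage event makes explicit a step the paper leaves implicit.
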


\paragraph{Proof.} See Appendix~\ref{app:proof5}.

For the joint choice $\mathcal H_j=\mathcal C_j^{\rm joint}$,
Proposition~\ref{prop:joint-coverage} supplies the enclosing ellipsoid
$\mathcal E_j$. Table~\ref{tab:breadth} uses global contrast bounds
over that ellipsoid directly, without the coordinatewise condition
\eqref{eq:stopping-condition}.

If $\Th$ is compact, all sampled-query counts grow, and $\Delta_S>0$,
the deterministic confidence enlargements around $B_S(\theta^\star)$
eventually have positive margins, even without a linear modulus: any
sequence of adverse models has a convergent subsequence in the fiber,
contradicting $\Delta_S>0$. This proves eventual certification under a
sufficiently long predeclared schedule. Equation~\eqref{eq:stopping-condition}
is the quantitative box version. When its coefficients are finite and
$n_{q,j}$ grow proportionally to $N_j$, the leading width is of order
$\sqrt{\log(|S|J/\delta)/N_j}$; the sufficient budget is therefore
quadratic in an inverse gap, up to the stated constants. This rate is
conditional on the modulus and does not cover a square-root singularity
at zero decision variance.

\begin{proposition}[Joint-region coverage and certification]
\label{prop:joint-coverage}
At each of $J$ predeclared checkpoints, let $T_j$ have the distribution
\[
 \begin{aligned}
 T_j&=\sum_{q\in S}\left[Z_q^2+
 \frac{(H_q-\nu_{q,j})^2}{2\nu_{q,j}}\right],\\
 Z_q&\sim\mathcal N(0,1),\qquad H_q\sim\chi^2_{\nu_{q,j}},
 \end{aligned}
\]
with independent terms. Generate $B$ independent copies of $T_j$ and
set $c_j$ to their $k_j$th order statistic, where
\[
 \Pr\!\left\{\operatorname{Bin}\left(B,1-\delta_p/J\right)
 \ge k_j\right\}\le\delta_c/J.
\]
For the set in \eqref{eq:joint-confidence}, the probability that
$\theta^\star\in\mathcal C_j^{\rm joint}$ for every eligible $j$ is
at least $1-\delta$, over both offline calibration and observed data.

Write
$\rho_{q,j}=\chi^2_{\nu_{q,j},1-a}/\chi^2_{\nu_{q,j},a}$,
and let $D_j^{\max}$ have mean and variance entries
$\sqrt{\rho_{q,j}v_q^\star/n_{q,j}}$ and
$\rho_{q,j}v_q^\star\sqrt{2/\nu_{q,j}}$. Define
\[
 \begin{aligned}
 w_j(\theta)&=(D_j^{\max})^{-1}
 [B_S(\theta)-B_S(\theta^\star)],\\
 \mathcal E_j&=\{\theta\in\Th:\|w_j(\theta)\|_2\le2\sqrt{c_j}\}.
 \end{aligned}
\]
With probability at least $1-\delta$ over calibration and data,
the following implication holds at every checkpoint: if the exact
infimum of each contrast $R_\theta(z)-R_\theta(x^\star)$ over
$\mathcal E_j$ exceeds the solver error $\eta_j$, then the
joint-region certificate returns $x^\star$ by $N_j$. The reported
sufficient checkpoints use the archived realization of $c_j$; the
probability guarantee concerns the randomized calibration procedure.
\end{proposition}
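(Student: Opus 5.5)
The plan is to prove the coverage claim through an exact pivot at $\theta^\star$ and then obtain the certification implication from the first part of Theorem~\ref{thm:stopping}, using $\mathcal E_j$ as the outer set. Fix a checkpoint $j$ and a query $q\in S$. Since the responses at $q$ are i.i.d.\ $\mathcal N(m_q^\star,v_q^\star)$, two statistics are available. The first is $Z_q=\sqrt{n_{q,j}}(\bar y_{q,j}-m_q^\star)/\sqrt{v_q^\star}\sim\mathcal N(0,1)$. The second is $H_q=\nu_{q,j}s_{q,j}^2/v_q^\star\sim\chi^2_{\nu_{q,j}}$. They are independent of each other, and independent across queries because different queries use disjoint fresh draws. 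A short computation gives
\[
\frac{(H_q-\nu_{q,j})^2}{2\nu_{q,j}}=\Bigl(\frac{s_{q,j}^2-v_q^\star}{v_q^\star\sqrt{2/\nu_{q,j}}}\Bigr)^2 .
\]
Let $D_j^\star$ be $D_j$ with $U_{q,j}$ replaced by $v_q^\star$. Then $\|(D_j^\star)^{-1}(B_S(\theta^\star)-\widehat B_j)\|_2^2$ has exactly the law of $T_j$ at the predeclared counts.

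Next I would pass from the oracle scales to the observed scales. Let $\mathcal A$ be the event that, for every pair $(q,j)$, the chi-square pivot satisfies $\chi^2_{\nu_{q,j},a}\le H_q\le\chi^2_{\nu_{q,j},1-a}$. Each of its $2|S|J$ tails has probability $a$, so $\Pr(\mathcal A^c)\le 2|S|J\,a=\delta_s$. On the lower half of $\mathcal A$ we have $U_{q,j}\ge v_q^\star$. Hence $D_j\ge D_j^\star$ entrywise, and $\|D_j^{-1}(B_S(\theta^\star)-\widehat B_j)\|_2^2\le T_j$. Coverage at $j$ therefore follows whenever $T_j\le c_j$.

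For the calibration step, let $t_j$ be the $(1-\delta_p/J)$-quantile of the continuous law of $T_j$. Then $\Pr(T_j>t_j)\le\delta_p/J$. The event $c_j<t_j$ means that at least $k_j$ of the $B$ calibration copies fall below $t_j$. That count is $\mathrm{Bin}(B,p)$ with $p\le 1-\delta_p/J$, so by stochastic monotonicity and the defining condition on $k_j$, $\Pr(c_j<t_j)\le\delta_c/J$. The calibration copies are independent of the online data, so I can union bound over both sources and over $j$. This gives $\Pr(\exists j:\theta^\star\notin\mathcal C_j^{\rm joint})\le\delta_s+\delta_p+\delta_c=\delta$. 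This order-statistic step, together with keeping track of which randomness $\mathcal E_j$ may depend on, is the main delicacy. Continuity of $T_j$ removes the tie issues, and $\mathcal E_j$ depends only on the offline $c_j$, which matches the hypothesis of Theorem~\ref{thm:stopping}.

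For the certification part, I work on the same event $\mathcal A\cap\{T_j\le c_j\ \forall j\}$. The upper half of $\mathcal A$ gives $U_{q,j}\le\rho_{q,j}v_q^\star$, so $D_j\le D_j^{\max}$ entrywise. For any $\theta\in\mathcal C_j^{\rm joint}$, the triangle inequality gives
\[
\|w_j(\theta)\|_2\le\|(D_j^{\max})^{-1}(B_S(\theta)-\widehat B_j)\|_2+\|(D_j^{\max})^{-1}(\widehat B_j-B_S(\theta^\star))\|_2 .
\]
Each term is at most the corresponding $D_j^{-1}$-norm, which is at most $\sqrt{c_j}$ (the second by the coverage just shown). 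Hence $\theta^\star\in\mathcal C_j^{\rm joint}\subseteq\mathcal E_j$ simultaneously for all $j$ with probability at least $1-\delta$. The first statement of Theorem~\ref{thm:stopping} then yields the implication. If the exact contrast infima over $\mathcal E_j$ exceed $\eta_j$, then so do those over the subset $\mathcal H_j$. The solver's lower bounds $\underline D_j(x^\star,z)$ are within $\eta_j$ of these infima, so they are positive, and $x^\star$ is certified by $N_j$. Coverage and Theorem~\ref{thm:soundness} rule out an earlier certification of a wrong decision.
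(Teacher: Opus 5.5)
Your proposal is correct and follows essentially the same route as the paper. The shared steps are the exact Gaussian/chi-square pivot at $\theta^\star$, the lower chi-square tail giving $U_{q,j}\ge v_q^\star$, the order-statistic calibration with a union bound totaling $\delta_s+\delta_p+\delta_c$, and then the upper tail $U_{q,j}\le\rho_{q,j}v_q^\star$ with the triangle inequality to get $\mathcal C_j^{\rm joint}\subseteq\mathcal E_j$ before invoking the generic part of Theorem~\ref{thm:stopping}; your population-quantile version of the calibration step and your explicit appeal to soundness to rule out an earlier wrong certification are slightly more careful than the paper's written argument.
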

\paragraph{Proof.} See Appendix~\ref{app:proof6}.

For comparison with the sufficient budget, fix an admissible alternative
$\theta'$ with a different unique optimum. A rule that outputs the true
optimum with probability at least $1-\delta$ at $\theta^\star$ and is
$\delta$-correct at $\theta'$ satisfies the change-of-measure bound
\begin{equation}
 \mathbb E_{\theta^\star}T\ge
 \frac{\operatorname{kl}(1-\delta,\delta)}
 {\max_{q\in\Q}\operatorname{KL}
 (P_{\theta^\star,q}\,\|\,P_{\theta',q})}.
 \label{eq:information-lower}
\end{equation}
Indeed, the event of outputting the true optimum has probabilities at
least $1-\delta$ and at most $\delta$ under the two models, so transcript
change of measure bounds its binary relative entropy by
$\sum_q\mathbb E_{\theta^\star}N_q\operatorname{KL}_q$; this sum is at
most $\mathbb E_{\theta^\star}T\max_q\operatorname{KL}_q$.
The lower bound concerns expected observations, whereas
Theorem~\ref{thm:stopping} gives a high-probability checkpoint budget.

\section{Computational and finite-sample results}
\label{sec:results}

\subsection{Computing Nonlinear Contrasts}
The scalar profile was checked on exact fibers of dimensions three and five
and on an all-unknown confidence set with seven queries. Valid interval
lower bounds and independent feasible multistart checks agreed to the
reported tolerances. Appendix~\ref{app:computational} gives the four
numerical intervals, solve counts, and qualifications. These calculations
use floating-point solvers and do not establish large-dimensional scaling.

\subsection{Certification budgets and information comparisons}
Table~\ref{tab:breadth} reports seven identifiable Gaussian instances,
including a multidimensional-fiber case. All use $\delta=0.05$ and
predeclared checkpoint schedules. The sufficient budget is the first
checkpoint at which a deterministic good-event bound makes the
certificate positive. The observed mean stopping time is measured
separately by Monte Carlo on the same schedule. The information column
uses \eqref{eq:information-lower} for an explicit opposite-optimum
alternative, so the ratio compares different sample-complexity
quantities.
\begin{table*}[t]
\centering
\caption{Decision margin $\Delta_S$, observed mean stopping time,
sufficient checkpoint budget, and explicit-alternative expected-sample
lower bound. All rows use the joint set \eqref{eq:joint-confidence} on
the same 21-checkpoint grid. Observed means use 300 independent streams
per row, except the all-unknown row (100). The all-unknown row uses the
global contrast profile. The final ratio divides the sufficient budget
by the information lower bound; these are not matching theorems.}
\label{tab:breadth}
\begin{tabular}{lrrrrr}
\toprule
Case & $\Delta_S$ & Observed mean & Sufficient & Info. lower & Suff./lower\\
\midrule
Restricted $M=6$ & 0.300 & $1{,}500$ & $3{,}000$ & 73.24 & 41.0\\
Restricted $M=8$ & 0.100 & $4{,}505$ & $24{,}000$ & 703 & 34.1\\
Restricted $M=8$ & 0.040 & $25{,}670$ & $96{,}000$ & $4{,}502$ & 21.3\\
Restricted $M=6$ & 0.010 & $402{,}240$ & $1{,}536{,}000$ & $72{,}949$ & 21.1\\
Full-query $M=8$ & 0.00204 & $41{,}656{,}320$ & $196{,}608{,}000$ & $4{,}008{,}117$ & 49.1\\
Three decisions $M=6$ & 0.0494 & $7{,}200$ & $24{,}000$ & 766 & 31.3\\
All unknown $M=4$, $r=3$ & 0.040 & $102{,}720$ & $384{,}000$ & $7{,}410$ & 51.8\\
\bottomrule
\end{tabular}
\end{table*}

Every identifiable stream certified: 1,900 runs, no decision errors,
and no unresolved outcomes. The observed means were $2.0$--$5.3$
times below the sufficient budgets. Dividing these Monte Carlo means
by the explicit-alternative information bounds gives $5.5$--$20.5$;
this is a descriptive comparison, not a matching bound. On the
restricted $b=0.9$ witness, two models with opposite unique optima
have identical query laws. The certificate stayed unresolved in all
300 paired streams under both models, consistent with the structural
abstention result. The observed study uses exact nested Gaussian
sufficient-statistic draws for replicated queries; its seeds, scripts, and stopping distributions are identified in the appendix.

Within the four restricted two-query rows, the ratio falls as
$\Delta_S$ shrinks: at larger gaps the rule stops at an early fixed
checkpoint, where calibration and checkpoint overhead are large relative
to the small information requirement. The two largest ratios use three
and seven sampled queries, respectively, while the other five use two;
this is an observed association, not a scaling law in $|S|$.

The remaining archived $M=8$ restricted case is structurally
nonidentifiable: two same-query-law covariance models have different
unique optima. It has no finite sufficient budget. The all-unknown
$M=4,r=3$ case samples seven projections that identify four means,
four marginal variances, and three covariances among the last three
coordinates. Three decision-relevant cross-covariances remain hidden.
Its explicit opposite-optimum alternative has maximum per-query KL
$3.58\times10^{-4}$, giving the $7{,}410$ information lower bound.

For every row, we used $21$ fixed checkpoints,
$\delta_s=0.008$, $\delta_p=0.040$, and $\delta_c=0.002$.
Each checkpoint radius was calibrated from $200{,}000$ independent
pivot draws with the binomial order-statistic bound in
Proposition~\ref{prop:joint-coverage}. The same calibration applies to
rows with identical query counts and allocations; the full-query row
uses a separate three-query calibration. The restricted $b=0.75$
global contrast lower bound is positive at $96{,}000$ observations
($0.00640$); a feasible negative contrast remains at $48{,}000$.
The all-unknown $r=3$ global profile gives $[0.00542,0.00552]$
at $384{,}000$ observations after $75$ convex inner solves.
A strictly feasible model has contrast $-0.00967$ at the preceding
$192{,}000$ checkpoint. One seeded observed-data region at $384{,}000$
contained the true model and gave profile interval
$[0.01904,0.02004]$ after $45$ inner solves. A separate LP cutting-plane
bound at the minimizing scalar value was also positive. These
computations use ordinary floating point, not interval arithmetic.

As an implementation check, $100{,}000$ independent Gaussian summary
draws were tested at each of the $21$ checkpoints for each calibration
family.
The empirical coverage ranges were $0.99834$--$0.99974$ for the
two-query case, $0.99819$--$0.99992$ for the three-query case, and
$0.99844$--$1.00000$ for the all-unknown case;
the per-checkpoint pivot target was $0.99810$. The coverage guarantee
comes from Proposition~\ref{prop:joint-coverage}, not this simulation.
The archived calibration realization fixes the reported sufficient
checkpoints; the formal probability statement includes its offline
randomness.

The coordinatewise fallbacks needed $192{,}000$ observations for the
restricted $b=0.75$, $M=8$ case and $1{,}536{,}000$ for the all-unknown
$M=4,r=3$ case; all seven box comparisons
appear in Appendix~\ref{app:box}. Unequal allocation of their union-bound error
budgets improved both margins but did not move either first checkpoint
on the doubling grid. The joint region, rather than that weighting
change, accounts for the reported budget reduction. The separate
known-diagonal $r=3$ pilot is excluded because its statistical problem
is easier. Code, calibration seeds, and numerical outputs are archived
in \path{icml_experiments/research/joint_region_20261019/}.

\subsection{Allocation Diagnostic}
On one 10-query mixed-projection family, 100 paired streams gave mean
stopping budgets of 6,000 for a strong fixed design and 5,920 for adaptive
tracking. Their paired difference was $-80\pm113$ (standard error),
so no reliable adaptive advantage was observed. Sampling all queries
uniformly cost 30,720 observations on average. This distinguishes selecting
informative queries from adapting their allocation; it is not a universal
design theorem. Appendix~\ref{app:allocation} gives the full comparison.

\section{Discussion and limitations}
\label{sec:discussion}

The projection laws determine a fiber of moment models. A robust
decision can be fixed across that fiber even when individual decision
variances and risk differences vary. The certificate tests this
decision claim against a finite-data confidence set and returns
unresolved when the claim cannot be proved. Theorem~\ref{thm:abstention}
shows that unresolved output is sometimes necessary for a restricted
query set. It does not follow that every unresolved finite-data case is
structurally nonidentifiable.

The computational result is scoped to affine moment models with a
convex parameter and confidence set. It reduces a nonlinear pairwise
risk contrast to a scalar global search with convex inner problems.
The $r=3$ confidence-set probe shows this calculation can work beyond
an exact one-dimensional fiber. The numerical inner and outer bounds
use standard floating-point solvers; they are not interval-arithmetic
proofs of the reported decimal places. Runtime and bound quality at
larger $M$, $r$, and decision sets remain unmeasured. A zero
competitor-variance floor needs endpoint handling beyond the positive
interval in Proposition~\ref{prop:contrast-profile}.

The statistical guarantees require finite queries, fresh Gaussian
draws, replication, and simultaneous coverage at the permitted
checkpoints. The fixed-allocation stopping theorem does not establish
an advantage for response-adaptive sampling. The new all-unknown
$r=3$ instance needs far more observations than its explicit
information lower bound, so computability alone does not make the
certificate sample-efficient. The coordinatewise confidence set pays for simultaneous errors in
several queried moments. The calibrated joint region reduced sufficient
budgets on all seven identifiable instances, but its scales require Gaussian replication,
and its offline Monte Carlo calibration is part of the randomized
coverage guarantee. The approximation and numerical solver costs at
larger decision sets remain unmeasured. Stacking raw Gaussian sample
means and chi-square sample variances is not itself an exact
$\chi^2_{2|S|}$ statistic.
The empirical study is synthetic and small; it does not validate the
method on an industrial deployment.
We screened structural-dynamics, taxi-trajectory, and appliance-meter
datasets, but none directly supplies selectable projections with a
known response map and replicated stationary Gaussian query laws; the
audit is archived in
\path{icml_experiments/research/observed_stopping_20261019/}.
The two-node spring network is a physics-based illustration with
simulated loads, not a real-data validation of this certificate.
Mixtures, heavy tails, unknown feature maps, and data-driven selection
of the DRO radius are outside the theorem.

The robust value here comes from a centered-moment ambiguity set.
For linear loss, its worst-case value depends only on the allowed mean
shift, even though the set also constrains second moments. The
ambiguity radius expresses a chosen risk level; the statistical
confidence intervals express uncertainty about the latent moments. These have separate roles. The observed lack of
adaptive gain over a strong fixed design in one family should not be
read as a universal allocation theorem. The distinction supported by
the current evidence is that choosing useful queries matters, while
the tested response-adaptive tracker added little once those queries
were fixed.

\paragraph{Reproducibility.} Instance definitions, scripts, seeds,
numerical outputs, and source hashes have been archived locally. The
supplementary package contains an anonymous copy of that archive.
The reported global contrast bounds use floating-point optimization,
not interval arithmetic.

\section*{Impact Statement}
This work studies when scalar projection observations support a reliable
risk-based decision and when the available information requires an unresolved
output. The experiments use synthetic data and do not establish safety or
reliability in an industrial deployment. Such use would require checking the
observation model, distributional assumptions, and numerical certificate in
the intended application.

\clearpage
\appendix
\section{Proofs of Main Statements}
\subsection{Proof of Proposition 1}
\label{app:proof1}
\begin{proof}
The two decision variances are
\[
 V_1(s)=1.0016+0.4s,\qquad V_2(s)=0.08+0.4s.
\]
For $d_b(s)=R_\theta(a_1)-R_\theta(a_2)$,
\[
 d_b(s)=-0.8b+\sqrt{V_1(s)}-\sqrt{V_2(s)}.
\]
Since $V_1(s)>V_2(s)$, the derivative
\[
 d_b'(s)=0.2\left(\frac{1}{\sqrt{V_1(s)}}-
                         \frac{1}{\sqrt{V_2(s)}}\right)<0
\]
for $s\in(-0.2,0.2]$, with continuity at $s=-0.2$. Hence the smallest advantage of $a_2$ is $d_b(0.2)=0.8(0.8-b)$; the smallest advantage of $a_1$ is $-d_b(-0.2)=0.8(b-1.2)$. The $2\times2$ covariance block is positive semidefinite exactly when $|s|\le\sqrt{1(0.04)}=0.2$; both endpoints are attained, and the fixed remaining block preserves positive semidefiniteness. Both decision variances vary with $s$. At $b=0.4$, $d_b(0.2)=0.32$ while $d_b(0)=-0.32+\sqrt{1.0016}-\sqrt{0.08}\approx0.398$. At $b=0.9$, monotonicity gives $d_b(-0.18)>0>d_b(0.18)$, so the unique minimizer changes. The sampled query means and variances depend only on $\mu_1,\mu_2,\Sigma_{11},\Sigma_{22}$, and are identical at the two witness models.
\end{proof}
\subsection{Proof of Proposition 2}
\label{app:proof2}
\begin{proof}
For every $v>0$, $\sqrt v=\min_{\tau>0}\{v/(2\tau)+\tau/2\}$, attained at $\tau=\sqrt v$. Substitute $v=v_z(\theta)$ into the contrast and interchange the two minima over the product set. Every minimizing $\tau$ lies in the displayed interval. Since the negative square root of a nonnegative affine function is convex, the fixed-$\tau$ objective is convex on $\mathcal C$. Equivalently, introduce $s\ge0$ with $s^2\le v_x(\theta)$ and minimize the objective with $-cs$ in place of $-c\sqrt{v_x(\theta)}$; this is a rotated second-order-cone constraint. To bound an outer interval $[a,b]$, let $m=(a+b)/2$. Convexity of $v/(2\tau)+\tau/2$ in $\tau$ implies that throughout this interval it exceeds the smaller of its tangent values at the two endpoints, each tangent taken at $m$:
\[
\min_{e\in\{a,b\}}
\left\{\frac e2+\frac{2m-e}{2m^2}v\right\}.
\]
Minimizing the two resulting fixed-coefficient convex objectives over $\mathcal C$ gives a valid lower bound for that interval. Evaluating the inner problem at any $\tau$ gives an upper bound. Refining the outer intervals closes the gap by continuity.
\end{proof}
\subsection{Proof of Theorem 1}
\label{app:proof3}
\begin{proof}
On the simultaneous coverage event, a returned $x$ satisfies, for each
$z\ne x$,
\[
 R_{\theta^\star}(z)-R_{\theta^\star}(x)
 \ge\inf_{\theta\in\mathcal H_j}[R_\theta(z)-R_\theta(x)]
 \ge\underline D_j(x,z)>0.
\]
Thus a wrong non-unresolved output requires failure of the simultaneous
coverage event, which has probability at most $\delta$.
\end{proof}
\subsection{Proof of Theorem 2}
\label{app:proof4}
\begin{proof}
The complete adaptive transcript, including the rule's private
randomness and stopping time, has the same distribution under the two
models. Under this common output distribution, let $E_i$ be the event
that a returned decision is not optimal for $\theta_i$. Correctness
gives $\Pr(E_i)\le\delta$. Because the two optimal-decision sets are
disjoint, every non-$\bot$ output belongs to $E_0\cup E_1$, including
any third decision. Hence
$\Pr(\widehat x\ne\bot)\le\Pr(E_0)+\Pr(E_1)\le2\delta$.
\end{proof}
\subsection{Proof of Theorem 3}
\label{app:proof5}
\begin{proof}
On the generic event, $\mathcal H_j\subseteq\mathcal E_j$, so each
exact contrast infimum over $\mathcal H_j$ is at least its infimum
over $\mathcal E_j$. Subtracting the solver error from
\eqref{eq:outer-stopping} leaves every returned lower bound positive.
The rule therefore certifies $x^\star$ at checkpoint $j$.

For the coordinatewise construction, on the box-coverage event
established in the paragraph after Theorem~\ref{thm:soundness},
both $\theta^\star$ and every $\theta\in\mathcal C_j$ obey the same
interval constraints. The chi-square event gives
$s_{q,j}^2\le(1+\epsilon_{q,j})v_q^\star$ and
$s_{q,j}^2\ge(1-\epsilon_{q,j})v_q^\star$.
Using the endpoints in \eqref{eq:query-intervals} therefore gives
\[
 |v_q(\theta)-v_q^\star|\le d^v_{q,j}.
\]
The mean event and the variance upper endpoint give
\[
 |m_q(\theta)-m_q^\star|
 \le\sqrt{2v_q^\star L/n_{q,j}}
   +\sqrt{\frac{2v_q^\star L}{n_{q,j}}
               \frac{1+\epsilon_{q,j}}{1-\epsilon_{q,j}}}
 \le d^m_{q,j}.
\]
Apply \eqref{eq:local-modulus} to the coordinatewise outer set.
Condition~\eqref{eq:stopping-condition} makes the exact infimum of
every required contrast over that set exceed $\eta_j$. The generic
result now applies. The box-coverage event has probability at least
$1-\delta$.
\end{proof}
\subsection{Proof of Proposition 3}
\label{app:proof6}
\begin{proof}
For a Gaussian stream, the standardized sample mean is $Z_q$ and
$\nu_{q,j}s_{q,j}^2/v_q^\star$ is $H_q$, independently. The order
statistic inequality implies that the chance, over calibration, of
$\Pr(T_j>c_j)>\delta_p/J$ is at most $\delta_c/J$. A union bound over
checkpoints accounts for $\delta_c+\delta_p$.
The lower chi-square tail gives $U_{q,j}\ge v_q^\star$; its upper tail
gives $U_{q,j}\le\rho_{q,j}v_q^\star$. Both hold simultaneously with
failure probability at most $\delta_s$. On these events the squared
standardized error at $\theta^\star$ in
\eqref{eq:joint-confidence} is no larger than $T_j$.
For any $\theta\in\mathcal C_j^{\rm joint}$, the triangle inequality
places both $\theta$ and $\theta^\star$ within radius $\sqrt{c_j}$
of $\widehat B_j$ under the observed scaling. Replacing those scales
by their coordinatewise upper bounds places $\theta$ in $\mathcal E_j$.
The positive global lower bounds therefore certify $x^\star$.
\end{proof}
\section{Computational Checks}
\label{app:computational}
\subsection{Computing nonlinear contrasts}
For the affine star covariance family
\[
\begin{gathered}
\Sigma(u)=\begin{pmatrix}1&(D^{1/2}u)^\top\\
D^{1/2}u&D\end{pmatrix},\\
D=\operatorname{diag}(d_1,\ldots,d_r)\succ0,\quad\|u\|_2\le1,
\end{gathered}
\]
the hidden covariance fiber has dimension $r$. We bounded the
Proposition~\ref{prop:contrast-profile} outer search with scalar
intervals and convex inner solves. Table~\ref{tab:profile} also compares
with direct multistart optimization. The first two rows use exact
population fibers. The last two use the confidence set
\eqref{eq:confidence-set} for an $M=4$, $r=3$ instance with seven
queried projections and 1,536,000 total observations. The centered row
uses the deterministic good-event enlargement; the observed row uses
one seeded set of Gaussian query summaries. These two profile rows
use the coordinatewise confidence set, allowing direct comparison
with the earlier solver. Every entry of
$\mu$ and $\Sigma$ is free within the model class. Its three
unobserved cross-covariances form the population fiber.
\begin{table*}[t]
\centering
\caption{Numerical contrast intervals and direct checks. The
centered and observed confidence sets also used 844 and 780 linear
bound solves for their moment slabs and PSD constraints. Bounds use
ordinary floating point, not
validated interval arithmetic.}
\label{tab:profile}
\begin{tabular}{lrrrr}
\toprule
Set & Inner solves & Lower & Feasible upper & Direct check\\
\midrule
Exact fiber, $r=3$ & 105 & $-0.7457943$ & $-0.7457843$ & $-0.7457845$\\
Exact fiber, $r=5$ & 123 & $-0.7528310$ & $-0.7528210$ & $-0.7528212$\\
Centered set, $r=3$ & 69 & $0.0028067$ & $0.0029067$ & $0.0028990$\\
Observed set, $r=3$ & 63 & $0.0208455$ & $0.0209455$ & $0.0209401$\\
\bottomrule
\end{tabular}
\end{table*}
Each confidence-set direct check used 11 feasible starts, which agreed
within $5\times10^{-11}$ in both the centered and observed cases. Both profile intervals have width $10^{-4}$.
The solver obtains a feasible anchor from the query intervals; it does
not use the true model to repair numerical iterates. This tests the
reduction on the statistical set used by the certificate, not only on
an exact fiber. It does not establish practical scaling at large $r$.

\section{Allocation Diagnostics}
\label{app:allocation}
\subsection{What allocation changed in the tested cases}
Table~\ref{tab:allocation-gate} compares the optimized fixed
information rate with balanced sampling over all queries and with a
decision-aware uniform allocation over useful queries. The first two
cases use mixed projections of two latent variance coordinates. The
last two have three decisions and additional nuisance queries.
\begin{table*}[t]
\centering
\caption{Optimal information rate divided by each fixed reference
rate. Large all-query ratios mainly measure the cost of sampling weak
or nuisance queries; they do not measure an adaptive gain.}
\label{tab:allocation-gate}
\begin{tabular}{lrrr}
\toprule
Case & Queries & Balanced over all & Decision-aware uniform\\
\midrule
Mixed projections & 8 & 3.22 & 1.005\\
Mixed projections & 10 & 4.23 & 1.005\\
Three decisions & 10 & 5.00 & 1.50\\
Three decisions & 12 & 4.76 & 1.19\\
\bottomrule
\end{tabular}
\end{table*}

On the 10-query mixed-projection case, 100 paired Gaussian streams used
the same $5\%$ confidence level and checkpoints for every policy.
Balanced sampling averaged $30{,}720\pm588$ observations (mean
$\pm$ standard error), uniform random $30{,}880\pm569$, fixed endpoint
Gram design $6{,}000\pm201$, and response-adaptive information
tracking $5{,}920\pm201$. Tracking stopped earlier than Gram in 5
runs, tied in 92, and stopped later in 3; its paired difference was
$-80\pm113$. In this family, choosing informative queries accounts
for most of the improvement, and tracking showed no reliable gain
over the strong fixed design. The three-decision 10-query case leaves
more room for unequal \emph{fixed} allocation because two close
decisions dominate the information constraint. An adaptive gain in
that case has not been tested.

Definitions, scripts, seeds, outputs, and source hashes are in the
project's archived research directories.

\section{Coordinatewise confidence-set comparison}
\label{app:box}
The box construction in \eqref{eq:confidence-set} remains a valid
fallback. With the same query allocations, error level, and 21
predeclared checkpoints, its budgets on the seven identifiable cases were
\begin{table}[t]
\centering
\caption{Coordinatewise fallback budgets, in the row order of Table~\ref{tab:breadth}.}
\begin{tabular}{lrr}
\toprule
Case & Box budget & Ratio\\
\midrule
Row 1 & $6{,}000$ & $81.9$\\
Row 2 & $48{,}000$ & $68.3$\\
Row 3 & $192{,}000$ & $42.7$\\
Row 4 & $3{,}072{,}000$ & $42.1$\\
Row 5 & $393{,}216{,}000$ & $98.1$\\
Row 6 & $48{,}000$ & $62.7$\\
Row 7 & $1{,}536{,}000$ & $207.3$\\
\bottomrule
\end{tabular}
\end{table}
The unequal fixed split of the same union-bound error budget improved
the margins but left these first checkpoints unchanged. Its weights,
code, and outputs are archived in
\path{icml_experiments/research/weighted_delta_20261019/}.

\section{Global lower bound for a convex inner problem}
For a fixed scalar in Proposition~\ref{prop:contrast-profile}, write
the convex inner objective as $f(\theta)$ and the joint moment
constraint as $\|A\theta-b\|_2\le r$. Let $\theta_0$ be feasible,
$g\in\partial f(\theta_0)$, and let a valid coordinate box
$[l,h]$ contain the joint region. For any $\lambda\ge0$,
$\|u\|_2\le1$, and $Y\succeq0$, convexity, the second-order-cone
constraint, and $\Sigma(\theta)\succeq0$ give
\[
 \begin{aligned}
 \inf_{\theta\in\mathcal C_j^{\rm joint}} f(\theta)
 &\ge f(\theta_0)-g^\top\theta_0\\
 &\quad-\lambda u^\top b-\lambda r
 +\min_{\theta\in[l,h]}d^\top\theta,\\
 d&=g+\lambda A^\top u-\mathcal S^*(Y),
 \end{aligned}
\]
where $\mathcal S(\theta)=\Sigma(\theta)$ and
$\mathcal S^*$ is its adjoint. The box minimum chooses $l_i$ for a
nonnegative coefficient and $h_i$ otherwise. Thus every choice of
dual quantities yields a global lower bound; numerical optimization
of those quantities affects tightness, not validity in exact
arithmetic. The implementation chooses the cone direction at a
feasible inner point and a PSD multiplier from active eigenvectors.
The scalar outer search uses interval lower bounds as in
Proposition~\ref{prop:contrast-profile}. Floating-point tolerances
remain a numerical qualification of the reported decimal bounds.

\section{Reproducibility Archive Map}
The separate supplementary archive preserves the following directory
names under \path{icml_experiments/research/}:
\begin{itemize}
\item \path{certificate_breadth_20261012}: original instances and explicit alternatives.
\item \path{allocation_gate_20261018}: profile calculations, confidence-set checks, and allocation diagnostics.
\item \path{joint_region_20261019}: pivot calibration, joint bounds, coverage checks, and seven-row results.
\item \path{observed_stopping_20261019}: per-stream stopping outcomes, witness checks, and application-fit audit.
\item \path{weighted_delta_20261019}: unequal confidence-error allocations and unchanged checkpoint results.
\end{itemize}
The saved calibration is reused only when query counts match. Source
hashes, random seeds, and numerical outputs are included. Numerical
results in this paper are archived computations, not regenerated by the
formatting step. Reproduction scripts require NumPy and SciPy; the
supplement provides a short reproduction guide.

\end{document}